\newtheorem{theorem}{Theorem}
\newcommand{\caA}{\ensuremath{\mathcal{A}}} 
\newcommand{\caS}{\ensuremath{\mathcal{S}}} 
\newcommand{\caL}{\ensuremath{\mathcal{L}}} 
\newcommand{\caH}{\ensuremath{\mathcal{H}}} 
\newcommand{\bE}{\ensuremath{\mathbb{E}}} 
\newcommand{\bI}{\ensuremath{\mathbb{I}}} 
\def\eqref#1{equation~\ref{#1}}
\def\1{\bm{1}}
\DeclareMathAlphabet{\mathsfit}{\encodingdefault}{\sfdefault}{m}{sl}
\SetMathAlphabet{\mathsfit}{bold}{\encodingdefault}{\sfdefault}{bx}{n}
\DeclareMathOperator*{\argmax}{arg\,max}
\newcommand{\method}{{EPPO}\xspace}
\title{Towards Applicable Reinforcement Learning: \\ Improving the Generalization and Sample Efficiency with Policy Ensemble}
\author{
Zhengyu Yang$^1$\footnote{
The work was conducted during Zhengyu Yang's internship at Microsoft Research. The corresponding author is Kan Ren.}\and
Kan Ren$^2$\and
Xufang Luo$^2$\and
Minghuan Liu$^1$\and
Weiqing Liu$^2$\and\\
Jiang Bian$^2$\and
Weinan Zhang$^1$\And
Dongsheng Li$^2$\\
\affiliations
$^1$Shanghai Jiao Tong University\\
$^2$Microsoft Research\\
\emails
\{yzydestiny, minghuanliu, wnzhang\}@sjtu.edu.cn, \\
\{kan.ren, xufluo, weiqing.liu, jiang.bian, dongsli\}@microsoft.com
}
\begin{document}

\maketitle

\begin{abstract}
It is challenging for reinforcement learning (RL) algorithms to succeed in real-world applications like financial trading and logistic system due to the noisy observation and environment shifting between training and evaluation. 
Thus, it requires both high sample efficiency and generalization for resolving real-world tasks.
However, directly applying typical RL algorithms can lead to poor performance in such scenarios.
Considering the great performance of ensemble methods on both accuracy and generalization in supervised learning (SL),
we design a robust and applicable method named Ensemble Proximal Policy Optimization (\method), which learns ensemble policies in an end-to-end manner. 
Notably, \method combines each policy and the policy ensemble organically and optimizes both simultaneously. 
In addition, \method adopts a diversity enhancement regularization over the policy space which helps to generalize to unseen states and promotes exploration.
We theoretically prove \method increases exploration efficacy, 
and through comprehensive experimental evaluations on various tasks, we demonstrate that \method achieves higher efficiency and is robust for real-world applications compared with vanilla policy optimization algorithms and other ensemble methods.
Code and supplemental materials are available at \href{https://seqml.github.io/eppo}{https://seqml.github.io/eppo}.
\end{abstract}
\section{Introduction}
Compared with simple simulation tasks, it is more difficult for RL algorithms to succeed in real-world applications. 
First, the observation contains much noise and the sampling cost is more expensive in real-world applications.
Second, the environment shifts between the training and evaluation due to the complexity of the real world.
For instance, in financial trading, the noise in the imperfect market information puts forward high requirements on the sample efficiency of the algorithm, and the volatile market requires the algorithms not to overfit to the training environment and retain the ability to generalize to unseen states during evaluation.
However, typical RL algorithms cannot achieve satisfactory performance in these applications. 
Motivated by the superior performance of ensemble methods on improving the accuracy and generalization ability in SL especially for small datasets, 
we resort to ensemble methods to fulfill the aforementioned requirements. In our work, we focus on policy ensemble, which is an integration of a set of \textit{sub-policies}, instead of value function ensemble for some reasons listed below.
i) Value-based methods perform worse than policy-based methods in noisy applications like MOBA games~\cite{ye2020mastering}, card games~\cite{guan2022perfectdou,li2020suphx} and financial trading~\cite{fang2021universal}.
ii) Previous ensemble techniques for RL are mainly applied to SL components, like environment dynamics modeling~\cite{kurutach2018model} and value function approximation~\cite{anschel2017averaged}. 
iii) Policy learning in RL algorithms is critical which is more different from SL. But it is not well studied and thus is worth exploring.

Notice that in many real-world applications such as those mentioned above, Proximal Policy Optimization (PPO)~\cite{schulman2017proximal} is always the first choice of the underlying RL algorithm due to its excellent and stable performances. Therefore, in order to derive an applicable RL algorithm, in this paper, we take PPO as our backbone, and propose a simple yet effective policy ensemble method named Ensemble Proximal Policy Optimization (\method).
\method rigorously treats ensemble policy learning as a first class problem to explicitly address:
i) what is the reasonable yet effective policy ensemble strategy in deep RL and ii) how it helps to improve the performance of policy learning.

Some existing works for policy ensemble aim to attain a diverse set of policies through \textit{individually} training various policies and simply aggregating them ex post factor~\cite{wiering2008ensemble,duell2013ensembles,saphal2020seerl}, which has few guarantees to improve the overall performance due to the neglection of the cooperation among different sub-policies.
The other works incorporate the divide-and-conquer principle to divide the state space, and derive a set of diverse sub-policies accordingly~\cite{ghosh2017divide,goyal2019reinforcement,ren2021probabilistic}. 
But the difficulty in dividing the state space and the unawareness of the sub-policy on the whole state space may significantly hurt the performance especially in \textit{deep} RL.
Furthermore, whether and how the ensemble method would benefit policy optimization still remain unsolved and require additional attention.

In contrast, \method resolves the ensemble policy learning from two aspects.
On one side, we argue that ensemble learning and policy learning should be considered as a whole organic system to promote the cooperation among the sub-policies and guarantee the ensemble performance. 
Thus, \method combines sub-policy training and decision aggregation as a whole and optimizes them under a unified ensemble-aware loss. 
To fully exploit the data and improve sample efficiency, sub-policies are also optimized with the data collected by the ensemble policy which aggregates all the co-training sub-policies for final decision.  
Furthermore, we theoretically prove that the decision aggregation of co-training sub-policies helps in efficient exploration thus improves the sample efficiency.
On the other side, considering that ensemble methods benefit from the diversity among sub-policies and it is difficult to divide the state space to train diverse policies reasonably, 
\method incorporates a diversity enhancement regularization within the policy space to guarantee the diversity and further improve the ensemble performance. 
We empirically found that it can improve policy generalization in real-world applications because the diversity enhancement regularization prevents the sub-policies from collapsing into a singular mode or over-fitting to the training environment, which retains the ability of the ensemble policy to generalize to unseen states.

In a nutshell, the main contributions of the work are threefold:
\begin{itemize}[leftmargin=3mm]
    \item We propose a simple yet effective ensemble strategy in ensemble policy learning and prove that aggregating the co-training sub-policies can promote policy exploration and improve sample efficiency. 
    \item To the best of our knowledge, \method is the first work that adopts the diversity enhancement regularization to attain a diverse set of policies for policy ensemble.
    \item Demonstrated by the experiments on grid-world environments, Atari benchmarks and a real-world application, \method yields better sample efficiency and the diversity enhancement regularization also provides a promising improvement on policy generalization.
\end{itemize}

\section{Background}
\subsection{Preliminaries}
Sequential decision making process can be formulated as a Markov decision process (MDP), represented by a tuple $\mathcal{M} = \langle \caS, \caA, p, p_0, r \rangle$. 
$\caS = \{s\}$ is the space of the environment states. 
$\caA = \{a\}$ is the action space of the agent. 
$p(s_{t+1}|s_t, a_t): \caS \times \caA \mapsto \Omega(\caS)$ is the dynamics model,
where $\Omega(\caS)$ is the set of distributions over $\caS$.
The initial state $s_0$ of the environment follows the distribution $p_0: S \mapsto \mathbb{R}$.
$r(s,a): \caS \times \caA \mapsto \mathbb{R}$ is the reward function. 
The objective is learning a policy $\pi$ to maximize the cumulative reward $\eta(\pi)=\bE_{\tau \sim \pi} \left[ \sum_{t=0}^{T} r(s_t,a_t)\right]$ where $\tau$ is the trajectory sampled by $\pi$. In this paper, we consider discrete control tasks where $\caA$ is limited and discrete.
\subsection{Related Works}
\paragraph{Ensemble in RL}
The recent works applying ensemble methods in RL are mainly focusing on environment dynamics modeling and value function approximation.
For environment dynamics modeling, several environment models are used to reduce the model variance~\cite{chua2018deep} and stabilize the model-based policy learning~\cite{kurutach2018model}.
As for value function approximation, Q-function ensemble is popular in alleviating the over-estimation~\cite{anschel2017averaged}, encouraging exploration~\cite{lee2021sunrise} and realizing conservative policy learning in offline reinforcement learning~\cite{wu2021uncertainty}.

Nevertheless, the mechanism behind environment dynamics modeling and value function approximation is similar to SL and it has a huge gap with policy learning in RL.
Among the existing works on ensemble policy learning, some works follow the technique used in SL and simply aggregate individually trained policies ex post facto. 
To generate a set of diverse sub-policies, different weight initialization~\cite{fausser2015neural,duell2013ensembles}, training epochs~\cite{saphal2020seerl}, or RL algorithms~\cite{wiering2008ensemble} are used.
Compared with SL tasks, RL agents must take a sequence of decisions instead of making a one-step prediction, which makes the cooperation among sub-policies more important to attain a good ensemble. 
Without considering policy ensemble and policy learning as a whole optimization problem, the cooperation among sub-policies can be neglected, so that these methods have few guarantees to improve the overall performance.
The other works incorporate the divide-and-conquer principle to divide the state space and derive a set of specialized policies accordingly, and then these policies are aggregated to solve the original task, which coincides with the idea of mixture-of-experts~(MOE)~\cite{jacobs1991adaptive}. The essential of the MOE is how to deliver data and obtain a set of  specialized policies (i.e. experts) which focus on different regions of the state space. \method, which proposes a new method (i.e. diversity enhancement regularization) to derive an ensemble of experts and shows better performance in experiments, is also a special case under the paradigm of MOE.
To divide the state space, DnC~\cite{ghosh2017divide} heuristically divides the whole task into several sub-tasks based on the clustering of the initial states while ComEns~\cite{goyal2019reinforcement} and PMOE~\cite{ren2021probabilistic} learn a division principle based on information theory and Gaussian mixture model respectively. 
However, it is hard to divide the state space in many environments and the unreasonable division can damage the performance.
Moreover, the unawareness of the whole state space of the sub-policies caused by the division may significantly hurt the performance especially in \textit{deep} RL scenarios and result in poor ensemble effectiveness.

\paragraph{Diversity Enhancement}
Diversity enhancement, which aims at deriving a set of diverse policies, is mainly used in population-based RL. To enhance the diversity, KL divergence~\cite{hong2018diversity}, maximum mean discrepancy~\cite{masood2019diversity} and determinantal point process~\cite{parker2020effective}
are adopted as bonus during training process.
In policy ensemble, the diversity among sub-policies is also important, but almost all the existing methods apply state space division to enhance the diversity, which may hurt the performance as mentioned before. Thus, we impose the diversity enhancement regularization on the policy space to guarantee the diversity among the sub-policies.

\section{Ensemble Proximal Policy Learning}
In this section, we first motivate our design in \method on the training of sub-policies and data collection, and then introduce the details of the learning method.
The overview of the architecture is illustrated in Figure \ref{fig:framework}. 

\begin{figure}
	\centering
	\includegraphics[width=.9\columnwidth]{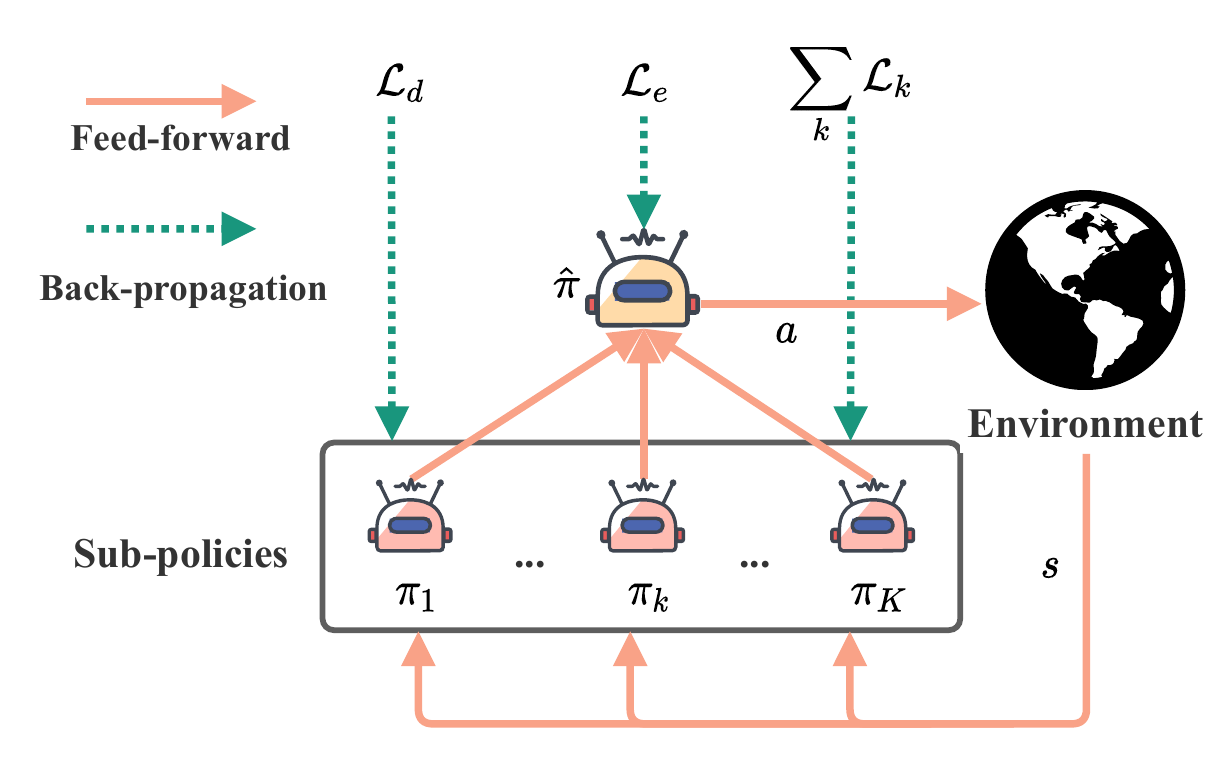}
	\vspace{-10pt}
	\caption{Framework of \method. Only the ensemble policy $\hat{\pi}$ is used to interact with the environment while all the sub-policies are updated simultaneously based on the same collected data.}
	\label{fig:framework}
	\vspace{-5pt}
\end{figure}

\subsection{Policy Ensemble}
As widely used in the literature of ensemble methods~\cite{zhou2002ensembling,anschel2017averaged}, the approximated function is aggregated by a set of base components, each of which can be optimized and work individually in the target task.
Thereafter, we consider maintaining $K$ sub-policies $\{ \pi_{\theta_1}, \pi_{\theta_2},\ldots,\pi_{\theta_K}\}$.
For brevity, we denote $\pi_k$ to represent the sub-policy parameterized by $\theta_k$. Then, 
the ensemble policy $\hat{\pi}$ can be derived through mean aggregation over the sub-policies. 
Formally, for a given state $s$, 
the ensemble policy $\hat{\pi}(\cdot|s)$ is  calculated as the arithmetic mean of the sub-policies:
\begin{equation}
    \label{eq:meanpooling}
    \begin{aligned}
    \hat{\pi}(\cdot|s) = \frac{1}{K}\sum_{k=1}^{K}\pi_k(\cdot|s) ~.
    \end{aligned}
\end{equation}
Note that the parameters of the ensemble policy $\hat{\pi}$ are exactly the whole set of parameters of sub-policies $\{\theta_k\}_{k=1}^K$.
In RL tasks, sample efficiency is a key problem and we would expect that the performance improvement of the designed ensemble method comes from the algorithm itself instead of the more trajectories sampled by more sub-policies. Thus, only the ensemble policy $\hat{\pi}$ is allowed for data collection in \method and the agent samples the action $a \sim \hat{\pi}(\cdot|s)$ from the ensemble policy $\hat{\pi}$ when interacting with the environment. Then the trajectories sampled by the ensemble policy $\hat{\pi}$ will be further used for updating the sub-policies.

\subsection{Policy Optimization}
The previous ensemble works in SL also motivate a fact that better sub-models
lead to better empirical results~\cite{zhang2020diversified}, thus we apply PPO to maximize the expected return (i.e., $\eta(\pi_{k}), 1\leq k \leq K$) of the sub-policies and the loss is defined as:
\begin{equation}
    \label{eq:base_loss}
    \small
    \begin{aligned}
        \caL_k(\pi_{k}) = \bE_{\hat{\pi}'}\! \left[
        \sum_{t=0}^{T}
         \mu \text{KL} \left[ \hat{\pi}'(\cdot|s_t),\pi_{k}(\cdot|s_t)\right] \!- \frac{\pi_{k}(a_t|s_t)}{\hat{\pi}'(a_t|s_t)} \hat{A}_{\hat{\pi}'}(t)\!\right]\!,
    \end{aligned}
\end{equation}
where $\hat{\pi}'$ is the ensemble policy, $\mu$ is an adaptive penalty parameter to constrain the size of the policy update, and $\hat{A}_{\hat{\pi}'}(t)=\hat{A}_{\hat{\pi}'}(s_t,a_t)$ is the advantage function estimated by a generalized advantage estimator (GAE)~\cite{schulman2015high} which describes how much better the action is than others on average.
It is worth noting the data used for optimization is collected by the ensemble policy $\hat{\pi}$, and we only optimize the parameters of sub-policies through policy gradient, 
which does not impose any additional sample cost in the environment as that of a single policy.

Though simply aggregating the predictions of the sub-models in the ensemble has shown effectiveness in improving the performance for supervised tasks \cite{zhou2018diverse}, there is less evidence showing that the aggregation of the sub-policies can improve decision making, because of the large gap between the essential of RL and SL, such as:
i) there may be more than one optimal action at the current state in RL while there is only one ground truth for SL; 
ii) the aggregation of some well-behaved sub-policies may derive undesirable action distribution and lead to bad states, especially when there are many different ways to handle the task and the sub-policies are separately optimized. 
Thus, it is necessary to take the cooperation among sub-policies into consideration and optimize them in a consistent learning paradigm.
In \method, we incorporate an ensemble-aware loss to encourage the cooperation among the sub-policies and ensure a well-behaved ensemble policy $\hat{\pi}$.
The definition of ensemble-aware loss which optimizes the ensemble policy $\hat{\pi}$ by PPO is
\begin{equation}
    \label{eq:ens_loss}
    \small
    \begin{aligned}
    \caL_e(\hat{\pi}) &= \bE_{\hat{\pi}'} \left[
    \sum_{t=0}^{T}
     \mu \text{KL} \left[ \hat{\pi}'(\cdot|s_t),\hat{\pi}(\cdot|s_t)\right] - \frac{\hat{\pi}(a_t|s_t)}{\hat{\pi}'(a_t|s_t)} \hat{A}_{\hat{\pi}'}(t)\right] ~.
    \end{aligned}
\end{equation}
Taking Eq.~(\ref{eq:meanpooling}) into Eq.~(\ref{eq:ens_loss}), we can update the ensemble policy by updating all sub-policies in a unified behavior under the same target.
To some extent, the ensemble-aware loss serves as a regularization that may promote the ensemble performance at the cost of  the performance of sub-policies.

However, there is a potential risk of mode collapse in policy ensemble that all the sub-policies converge to a single policy, which makes policy ensemble useless.
In our method, the problem is even worse because all the sub-policies share a similar training paradigm, which makes these sub-policies tend to behave similarly.
Moreover, \method randomly chooses a sub-policy for action sampling at each step (Eq.(\ref{eq:meanpooling})), so the diversity among sub-policies should promote the exploration of the ensemble policy.
To this end, we propose a diversity enhancement regularization to prevent all sub-policies from collapsing into a singular mode and ensure diverse sub-policies to further improve the ensemble performance. 
Intuitively, in order to enhance the diversity, the regularization should encourage the action distributions proposed by different sub-policies to be orthogonal with each other. 
Specifically, for discrete action space, the diversity enhancement regularization adopted in \method is defined as
\begin{equation}
    \label{eq:div_loss}
    \begin{aligned}
    \caL_{d} = \frac{2}{K(K-1)} \sum_{1\leq i < j \leq K} \sum_a \pi_i(a|s) \pi_j(a|s).
    \end{aligned}
\end{equation}
We note that there are many optional metrics to encourage the diversity among sub-policies such as KL divergence~\cite{hong2018diversity} and MMD~\cite{masood2019diversity} in RL literature, yet we adopt Eq.(\ref{eq:div_loss}) in \method due to its computationally efficiency~\cite{li2012diversity}.

In conclusion, the overall loss to minimize is defined as
\begin{equation}
    \label{eq:tot_loss}
    \begin{aligned}
    \caL = \sum_{k=1}^{K} \caL_k(\pi_{k}) + \alpha \caL_e(\hat{\pi}) +  \beta \caL_{d},
    \end{aligned}
\end{equation}
where $\alpha$ and $\beta$ are the hyper-parameters. 

\subsection{Theoretical Analysis}

\begin{theorem}[Mean aggregation encourages exploration]\label{thm:1}
Suppose $\pi$ and $\{\pi_i\}_{1 \leq i \leq K}$ are sampled from $P(\pi)$, then the entropy of the ensemble policy $\hat{\pi}$ is no less than the entropy of the single policy in expectation, i.e., 
$\bE_{\pi_1,\pi_2,...,\pi_K}\left[ \caH(\hat{\pi}) \right] \ge \bE_{\pi} \left[ \caH(\pi) \right]$.

\end{theorem}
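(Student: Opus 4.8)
The plan is to reduce the statement to the concavity of Shannon entropy, after which the claim follows from Jensen's inequality and the fact that the sub-policies share a common marginal law. The key structural observation is Eq.~(\ref{eq:meanpooling}): for every fixed state $s$, the ensemble distribution $\hat{\pi}(\cdot|s)=\frac{1}{K}\sum_{k=1}^K \pi_k(\cdot|s)$ is exactly an equal-weight convex combination of the sub-policy distributions. So the problem is, at its heart, a statement about the entropy of a mixture.

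First I would invoke concavity of the entropy functional $\caH$ over the simplex of action distributions. Applying Jensen's inequality pointwise in $s$ yields
\begin{equation}
    \label{eq:proofsketch-concavity}
    \begin{aligned}
    \caH\!\left(\tfrac{1}{K}\textstyle\sum_{k=1}^{K}\pi_k(\cdot|s)\right) \;\ge\; \tfrac{1}{K}\textstyle\sum_{k=1}^{K}\caH\!\left(\pi_k(\cdot|s)\right).
    \end{aligned}
\end{equation}
If $\caH(\pi)$ is understood as a state-averaged entropy, I would then average both sides over the relevant state distribution; since this is a monotone linear operation it preserves~(\ref{eq:proofsketch-concavity}), giving $\caH(\hat{\pi})\ge \frac{1}{K}\sum_{k=1}^{K}\caH(\pi_k)$.

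Next I would take the expectation over the random draws $\pi_1,\dots,\pi_K\sim P(\pi)$. By linearity of expectation the right-hand side becomes $\frac{1}{K}\sum_{k=1}^{K}\bE_{\pi_k}\!\left[\caH(\pi_k)\right]$, and because each sub-policy has the same marginal law $P(\pi)$, every summand equals $\bE_{\pi}\!\left[\caH(\pi)\right]$. The $K$ identical terms and the $\tfrac{1}{K}$ prefactor cancel, leaving exactly $\bE_{\pi}\!\left[\caH(\pi)\right]$, which establishes $\bE_{\pi_1,\dots,\pi_K}[\caH(\hat{\pi})]\ge \bE_{\pi}[\caH(\pi)]$.

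Honestly, there is no deep obstacle here: the entire argument rests on the concavity of entropy plus the identical-distribution assumption, and independence of the draws is not even required. The only points demanding care are stating the concavity of $\caH$ cleanly (so that Jensen applies with equal weights $1/K$) and being explicit that the expectation over states and the expectation over the sampling of sub-policies commute with the pointwise inequality—both being monotone linear operators, they do. I would also remark in passing that equality holds precisely when all sub-policies coincide almost surely, which ties the bound back to the paper's motivation that diversity among sub-policies strictly increases the ensemble's exploratory entropy.
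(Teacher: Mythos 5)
Your proposal is correct and follows essentially the same route as the paper: both arguments reduce to the fact that the entropy of the equal-weight mixture dominates the average of the sub-policy entropies, and then use the identical marginal law of the $\pi_k$ to collapse the expectation; the only cosmetic difference is that you invoke concavity of $\caH$ as a known fact while the paper re-derives it by applying Jensen's inequality to the logarithm.
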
 
The proof can be found in Appendix~A in the supplemental materials.
Theorem \ref{thm:1} illustrates that the ensemble policy $\hat{\pi}$ enjoys more effective exploration than the single policy in the policy learning procedure.
Thus, aggregating the sub-policies during training can improve the sample efficiency.
We have also observed the corresponding phenomenon in the experiments, which reflects the effectiveness of the mean aggregation operation for policy ensemble in our method.

\section{Experiments}
In our paper, we only consider discrete control tasks as it is commonly adopted in real-world scenario applications and the continuous control tasks can be discretized for the ease of optimization.
To evaluate the performance of \method, we conduct experiments on Minigrid~\cite{gym_minigrid}, Atari games~\cite{bellemare2013arcade} and financial trading \cite{fang2021universal}, which span the simulated tasks and real-world applications. 

The experiments and the analysis in this section are led by the following two research questions (RQs). 
{\bf RQ1}: Does our method achieve higher sample efficiency through policy ensemble? 
{\bf RQ2}: Is the generalization performance of our method better than the other compared methods?

\subsection{Compared Methods} We compare \method with the following baselines including two variants of EPPO.
\begin{itemize}[leftmargin=3mm]
    \item {\bf PPO}~\cite{schulman2017proximal} is a state-of-the-art policy optimization method which has been widely used in real-world applications~\cite{fang2021universal,ye2020mastering}. 
    \item {\bf PE} (Policy Ensemble) is based on a traditional policy ensemble method~\cite{duell2013ensembles} which trains $K$ policies \textit{individually} by PPO and then aggregates them. In our paper, we consider two aggregation operation, i.e., majority voting and mean aggregation, which are defined as $\hat{\pi}(a|s)=\frac{1}{K}{\sum_{k=1}^K \bI((\mathop{\arg\max}_{a'}\pi_k(a'|s))==a)}$ and Eq.~(\ref{eq:meanpooling}), respectively. 
    We denote these two methods as {\bf PEMV} and {\bf PEMA} for short.
    \item {\bf DnC}~\cite{ghosh2017divide} partitions the initial state space into $K$ slices, and optimizes a set of policies each on different slices. During training, these policies are periodically distilled into a center policy that is used for evaluation.
    \item {\bf ComEns}~\cite{goyal2019reinforcement} uses an information-theoretic mechanism to decompose the policy into an ensemble of primitives and each primitive can decide for themselves whether they should act in current state. 
    \item {\bf PMOE}~\cite{ren2021probabilistic} applies a routing function to aggregate the sub-policies and deliver the data to different sub-policies during optimization. 
    \item {\bf SEERL}~\cite{saphal2020seerl} uses a learning rate schedule to get multiple policies in a round and selects a set of policies for ensemble according to performance and diversity.
    \item {\bf \method} is our proposed method described above, which has two other variants for ablation study: {\bf \method-Div} is the method \textit{without} diversity enhancement regularization defined in Eq.~(\ref{eq:div_loss}) and {\bf \method-Ens} is the method \textit{without} ensemble-aware loss defined in Eq.~(\ref{eq:ens_loss}). 
\end{itemize}

Since we focus on policy ensemble, we omit Q-function ensemble methods like Sunrise~\cite{lee2021sunrise}.
In all compared baselines, we take PPO as the base policy optimization method for fairness; 
besides, they have roughly the same number of parameters (i.e., $K$ times the parameter size of PPO and we set $K=4$ as default for all experiments) and the same number of samples collected in one training epoch.

\subsection{Improved Efficiency on Minigrid}
We first investigate whether EPPO can improve the sample efficiency. In this part, we consider two partial-observable environments with sparse reward in Minigrid~\cite{gym_minigrid}: \textit{Distributional-Shift} and \textit{Multi-Room}, as shown in Figure~\ref{fig:minigrid}, where the agent aims at reaching the given target position and a nonzero reward is provided only if the target position is reached.
Specifically, the place of the second line of the lava in \textit{Distributional-Shift} is reset, and the shape of \textit{Multi-Room} is regenerated during the reset procedure. 
Due to the more complex structure and longer distance between the start position and the goal, \textit{Multi-Room} is more difficult.

\begin{figure*}[htb]
\small
	\centering
	\subfigure{
	    \label{fig:env1}
		\begin{minipage}[t]{0.24\textwidth}
			\centering
			\includegraphics[align=c,width=0.75\textwidth]{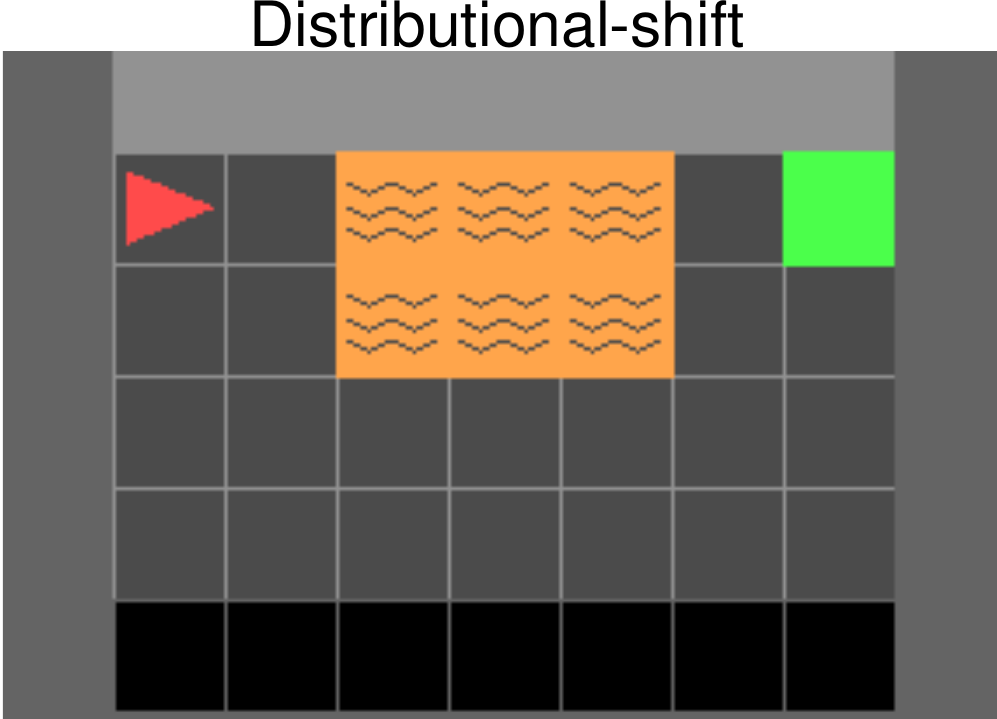}
		\end{minipage}
	}
	\subfigure{
	    \label{fig:lava_res}
		\begin{minipage}[t]{0.72\textwidth}
			\centering
			\includegraphics[align=c,width=0.99\textwidth]{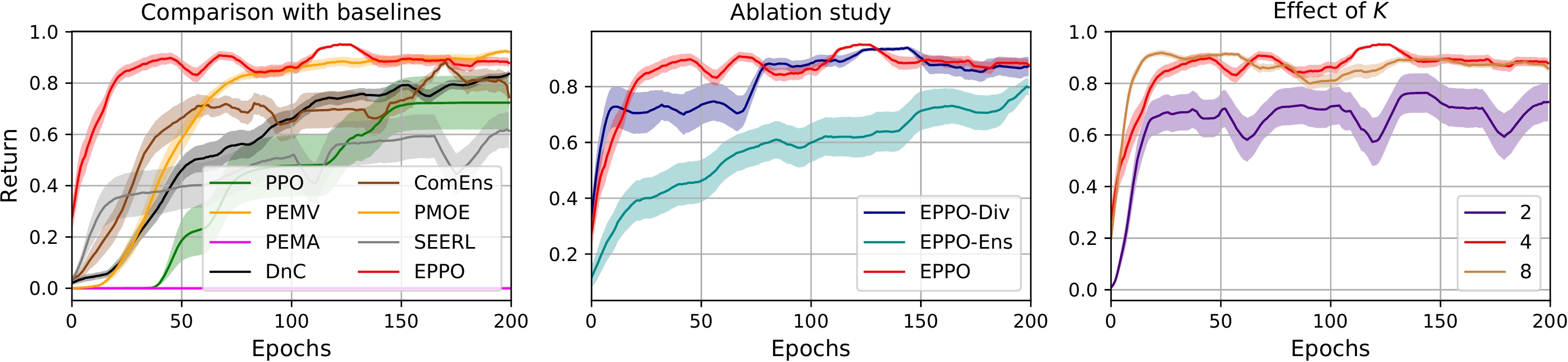}
		\end{minipage}
	}
	
	\subfigure{
	    \label{fig:env2}
		\begin{minipage}[t]{0.24\textwidth}
			\centering
			\includegraphics[align=c,width=0.72\textwidth]{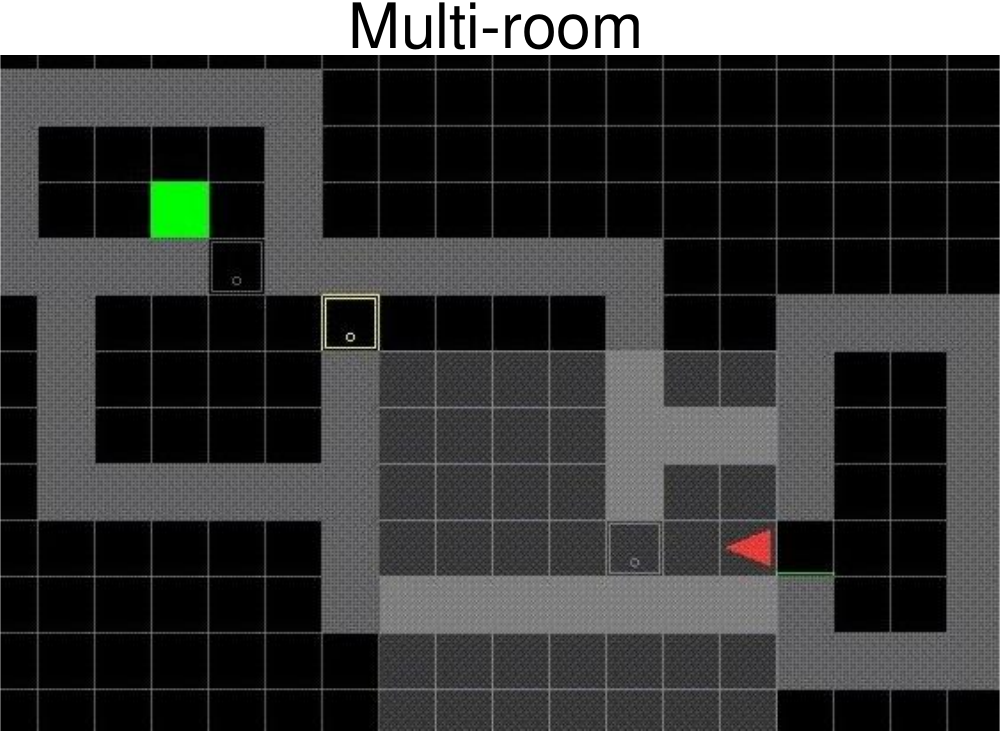}
		\end{minipage}	
	}
	\subfigure{
	    \label{fig:mroom_res}
		\begin{minipage}[t]{0.72\textwidth}
			\centering
			\includegraphics[align=c,width=0.99\textwidth]{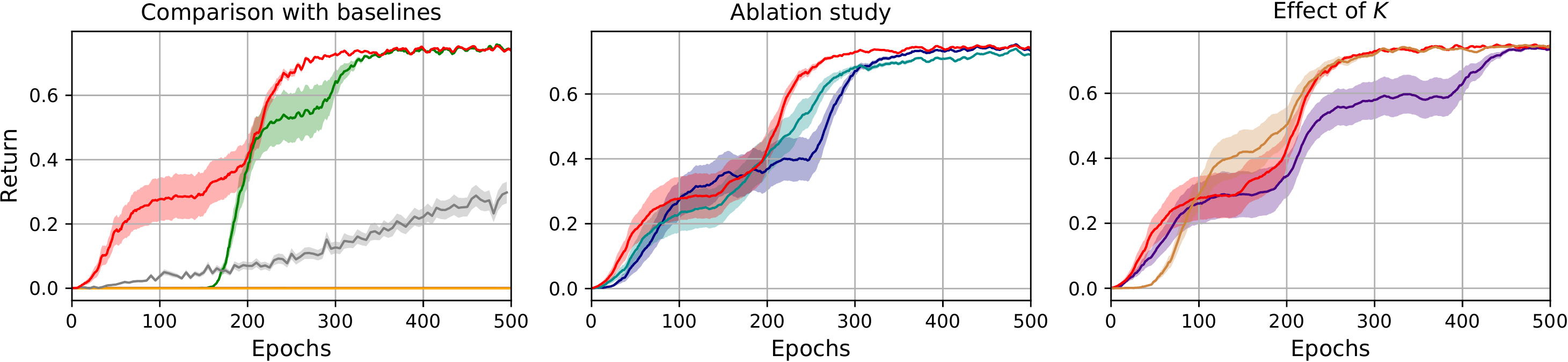}
		\end{minipage}
	}
	
	\caption{The first column gives a snapshot of the environments where the red triangle and green square represent the position and the goal of the agent respectively.
	Learning results on Minigrid conducted with 5 random seeds. The top and bottom rows show the information about \textit{Distributional-Shift} and \textit{\textit{Multi-Room}}, respectively. 
	{\bf First column}: A snapshot of the environment where the red triangle and green square represent the position and the goal of the agent respectively. 
	{\bf Second column}: Learning curves of all the compared methods.
	{\bf Third column}: Learning curves of \method and its variants. 
	{\bf Last column}: Learning curves of \method when $K$ is set to different values.
	} \label{fig:minigrid}
\end{figure*}
As shown in Figure~\ref{fig:minigrid}, \method enjoys the best sample efficiency in both environments~({\bf RQ1}).
We notice that PEMA and PEMV fail (i.e., $\text{return}=0$) in both environments while PPO can get better performance.
And we conclude the failure from two perspectives. 
First, considering the number of samples needed for PPO to get a positive reward is large, PEMA and PEMV may need $K$ times samples for a positive reward because they individually train $K$ PPO policies. 
Second, due to the asynchrony among the sub-policies of PE method, the useful knowledge can be overwhelmed thus neglected due to the aggregation operation by the worthless knowledge. 
Their failure implies the necessity of an ensemble-aware loss and explains the rationality of sampling with ensemble policy $\hat{\pi}$.
For SEERL, its final performance is even worse than that at the last epoch because the ex-post selection cannot ensure an improved performance, which further emphasizes the necessity of the joint optimization of the sub-policies.
Moreover, the failure of DnC, ComEns and PMOE in \textit{Multi-Room} environment can be attributed to the division operation on the state space that may not only unreasonably divide the space but also hinder the ability on exploration to the full environment. 
And comparing the performance of the methods based on state space division and \method, we find that \method consistently achieves better performance, which implies that a diversity enhancement regularization is a better choice for diversity enhancement in policy ensemble.

The ablation study in Figure~\ref{fig:minigrid} shows that \method outperforms its two variants, thus both of diversity enhancement regularization and ensemble-aware loss appear to be crucial for the superior performance of \method.
In addition, the returns of both \method and its variants improve quickly at first, which confirms the result in Theorem~\ref{thm:1} that mean aggregation encourages exploration.
In the last column of Figure~\ref{fig:minigrid}, we analyze the effect of using various number of sub-policies in \method. 
The results indicate that an extremely small $K$ cannot lead to a good performance and the performance cannot be further improved by increasing $K$ by a large margin.
When $K$ is a small value like $2$, the sub-policies tend to have fewer overlaps in the action space, thus the mean aggregation operation  is unable to extract the valuable information from sub-policies and leads to worse performance.

\subsection{Comparative Evaluations on Atari Games}
Having seen the superior performance of \method in environments with sparse reward, we also want to evaluate \method on more difficult and widely used benchmarks. Particularly, we follow the settings in~\cite{saphal2020seerl} and choose four environments from Atari games as the testbed.
As shown in Table~\ref{ta:atari}, \method can still consistently achieve the best performance in 10M environment steps, suggesting a better sample efficiency~({\bf RQ1}). 

\begin{table}[hbt]
    \centering
    \scalebox{0.9}{
    \begin{tabular}{c|cccc}
    \hline
        ~ & Alien & Amidar & Pong & Seaquest  \\ \hline
        PPO & 1174.6 & 283.8 & 20.8 & 1110.2  \\ 
        PEMV & 678.0 & 74.3 & 6.9 & 364.2  \\ 
        PEMA & 815.2 & 113.8 & 7.6 & 563.4  \\ 
        DnC & 158.0 & 41.5 & -21.0 & 185.0  \\ 
        ComEns & 351.6 & 51.0 & -20.7 & 504.2  \\ 
        PMOE& 1488.2 & 247.3 & 3.0 & 1800.5  \\ 
        SEERL  &  1127.8 &  155.0 &  20.0 &  928.4 \\
        \method-Div &  1173.2 &  304.6 &  19.4 &  1580.8 \\
        \method-Ens &  1651.2 &  311.8 &  20.8 &  1816.6 \\
        \method & \bf 1984.0 & \bf 439.7 & \bf 20.9 & \bf 1881.2 \\ \hline
    \end{tabular}
    
    }
    \caption{Performance on Atari games at 10M interactions. All results represent the average over 100 episodes of 5 random training runs.
    Bold font represents the best results.
    }\label{ta:atari}
\end{table}

\subsection{Generalizable Application: A Financial Trading Instance}

\begin{table}[hbt]
\centering

\scalebox{0.9}{
\begin{tabular}{c|ll}
\hline
\multicolumn{1}{c|}{\multirow{2}{*}{Phase}} & \multicolumn{2}{c}{Dataset 1801-1908} \\ 
\multicolumn{1}{c|}{}                       & \# order & \multicolumn{1}{c}{Time Period} \\ 
\hline
Training                                   & 845,006 & 01/01/2018 - 31/12/2018 \\
Validation                                 & 132,098 & 01/01/2019 - 28/02/2019 \\
Test                                       & 455,332 & 01/03/2019 - 31/08/2019 \\ 
\hline\hline
\multicolumn{1}{c|}{\multirow{2}{*}{Phase}} &  \multicolumn{2}{c}{Dataset 1807-2002}     \\ 
\multicolumn{1}{c|}{}                       & \# order & \multicolumn{1}{c}{Time Period} \\ 
\hline
Training                                   &  854,936 & 01/07/2018 - 30/06/2019 \\
Validation                                 &  163,140 & 01/07/2019 - 31/08/2019 \\
Test                                       &  428,846 & 01/09/2019 - 29/02/2020 \\ 
\hline
\end{tabular}
}
\caption{The dataset statistics of financial order execution task.}\label{tab:dataset}
\end{table}
To evaluate the generalization ability, we conduct experiments on order execution \cite{fang2021universal} which is a fundamental yet challenging problem in financial trading.
In order execution, the environments are built upon the historical transaction data, and the agent aims at fulfilling a trading order which specifies the date, stock id and the amount of stock needed to be bought or sold. 
In particular, the environments are usually formulated as training, validation and test phases each of which is corresponding to a specific time range. 
To be specific, \textit{training environment} and \textit{validation environment} are used for policy optimization and policy selection respectively.
\textit{Test environment} is unavailable during training.  
Due to the shift of macroeconomic regulation or other factors in different time, \textit{test environment} may differ a lot to the \textit{training} and \textit{validation environment}.
Thus, the selected policy has to make decisions in unfamiliar states during testing and the performance in \textit{test environment} is a good surrogate evaluation of the generalization ability.

Following \cite{fang2021universal}, the reward is composed of price advantage (PA) and market impact penalty where PA encourages the policy to get better profit than a baseline strategy.
Specifically, we take TWAP as the baseline strategy, which equally splits the order into $T$ pieces and evenly executes the same amount of shares at each timestep during the whole time horizon.
And an increase of 1.0 in PA can bring about a 0.5\% annual return with 20\% daily turnover rate. 
The averaged PA and reward of the orders in the \textit{test environment} are taken as evaluation metrics in order execution.
We conduct experiments on the two large datasets 1801-1908 and 1807-2002 published in 
\cite{fang2021universal} and the statistics of the datasets can be found in Table~\ref{tab:dataset}. 

The results of different methods are reported in Table~\ref{tab:pa}. 
As expected, \method achieves the best performance in both PA and reward in two datasets, suggesting that our proposed method has great potential in generalizing to unseen states~({\bf RQ2}). 
And we find that PEMV has a worse reward than PPO in 1801-1908, which implies individually training sub-policies has no guarantee on the ensemble performance, thus an ensemble-aware loss, i.e., $L_e$ in Eq.~(\ref{eq:ens_loss}) which can encourage the coordination among sub-policies, is necessary. 
In addition, the performance degradation of \method-Ens also illustrates the importance of the ensemble-aware loss. 
Moreover, from the comparison between \method and \method-Div, we find that the diversity enhancement regularization further improves the generalization performance. This phenomenon coincides with the observations in SL \cite{zhou2002ensembling} that the diversity among sub-models can reduce the variance, alleviate the over-fitting problem and improve the generalization performance of the ensemble method.

\begin{table}[!ht]
    \centering
    \scalebox{0.9}{
    \begin{tabular}{c|cccc}
    \hline
        Dataset & \multicolumn{2}{c}{1801-1908} & \multicolumn{2}{c}{1807-2002}   \\ \hline
        Metric & PA & Reward & PA & Reward  \\ \hline
        PPO & 7.43 & 4.57 & 5.30 & 2.75  \\ 
        PEMV & 7.47 & 4.41 & 6.03 & 3.44  \\ 
        PEMA & 7.87 & 5.00 & 5.98 & 3.42  \\ 
        DnC & 7.99 & 5.47 & 5.36 & 2.75  \\ 
        ComEns & 7.70 & 4.79 & 4.59 & 1.32  \\ 
        PMOE & 3.12 & -0.03 & 3.09 & 1.43  \\ 
        SEERL & 7.03 & 5.04 & 5.52 & 3.51  \\ 
        \method-Div & 8.38 & 5.51 & 6.21 & 3.30  \\ 
        \method-Ens & 6.38 & 3.87 & 5.51 & 3.51  \\ 
        \method & \bf 8.82 & \bf 5.99 & \bf 6.31 & \bf 3.57 \\ \hline
    \end{tabular}
    }
    \caption{Test performance on order execution task; the higher metric value means the better performance. The results are the average of all test orders over ten random training runs.
    } \label{tab:pa}
    
\end{table}

To evaluate the effect of $K$, we conduct experiments when $K \in \{2,4,8\}$ in dataset 1801-1908 and the results are shown in Table~\ref{tab:diffk}.
Similar to the experimental results in Minigrid, $K=2$ leads to a poor performance, which can still be attributed to the difficulty of getting consensus during aggregation when $K$ is small. 
In addition, a larger $K$ does not always lead to better performance. 

\begin{table}
\centering
\begin{small}
\begin{tabular}{c|ccc} 
\hline
$K$ & 2  & 4 & 8 \\ \hline
PA     & 7.49 & \bf 8.82 & 8.64  \\ 
Reward & 4.42 & \bf 5.99 & 5.83  \\ \hline
\end{tabular}
\caption{Results on different $K$. } \label{tab:diffk} 
\end{small}
\end{table}

\subsection{Analysing the Policy Diversity}
After showing the performance of diversity enhancement regularization in improving the sample efficiency and policy generalization, we further verify that the regularization does diversify the sub-policies. 
Motivated by ~\cite{hong2018diversity}, we utilize the action disagreement (AD) to measure the diversity among sub-policies, which is defined as
\begin{equation}
        \small
        \frac{\sum_{i,j} \sum_{s \in M} \mathbb{I}(\mathop{\argmax}_{a} \pi_i(a|s) \neq \mathop{\argmax}_{a} \pi_j(a|s))}{|M|K(K-1)},
\end{equation}
where $M$ is a set of states.
In dataset 1801-1908, the AD values of \method and \method-Div are 15.9\% and 14.3\%, respectively, which demonstrates the ability of diversity enhancement regularization in improving diversity.

\section{Conclusion and Future Work}
In this paper, we focus on ensemble policy learning and propose an end-to-end ensemble policy optimization framework called EPPO that combines sub-policy training and policy ensemble as a whole. In particular, EPPO updates all the sub-policies simultaneously under the ensemble-aware loss with a diversity enhancement regularization.
We also provide a theoretical analysis of EPPO on improving the entropy of the policy which leads to better exploration. 
Extensive experiments on various tasks demonstrate that \method substantially outperforms the baselines for both sample efficiency and policy generalization performance.
In the future, we plan to incorporate more flexible sub-policy ensemble mechanisms and dive deeper into the mechanism behind the ensemble policy learning.

\bibliographystyle{named}
\bibliography{ijcai22}

\newpage
\onecolumn
\appendix
\section{Proof of Theorem 1} \label{app:proof}
\setcounter{theorem}{0}
\begin{theorem}[Mean aggregation encourages exploration] \label{thm:a1}
Suppose $\pi$ and $\{\pi_i\}_{1 \leq i \leq K}$ are sampled from $P(\pi)$, then the entropy of the ensemble policy $\hat{\pi}$ is no less than the entropy of the single policy in expectation, i.e., 
$\bE_{\pi_1,\pi_2,...,\pi_K}\left[ \caH(\hat{\pi}) \right] \ge \bE_{\pi} \left[ \caH(\pi) \right]$.
\end{theorem}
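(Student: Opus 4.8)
The plan is to reduce the statement to a single structural fact: the Shannon entropy functional is \emph{concave} on the probability simplex, after which the result follows from Jensen's inequality together with linearity of expectation and the fact that the sub-policies share the same law as $\pi$. Throughout I would treat the inequality at a fixed state $s$, writing $\caH(\pi) = -\sum_a \pi(a|s)\log\pi(a|s)$; since the argument is pointwise in $s$, it persists under any subsequent averaging over the state distribution.

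First I would recall that $p \mapsto \caH(p)$ is concave over the simplex, which is immediate from the concavity of the scalar map $x \mapsto -x\log x$ applied coordinate-wise. The key step is then to apply Jensen's inequality to the uniform mixture defined in Eq.~(\ref{eq:meanpooling}): for \emph{every} realization of the tuple $(\pi_1,\dots,\pi_K)$,
\[
\caH(\hat{\pi}) = \caH\!\left(\frac{1}{K}\sum_{k=1}^{K}\pi_k\right) \ge \frac{1}{K}\sum_{k=1}^{K}\caH(\pi_k).
\]
This is the heart of the argument: averaging distributions can only raise the entropy relative to the mean of the individual entropies.

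Finally I would take the expectation of both sides over $\pi_1,\dots,\pi_K \sim P(\pi)$. By linearity, $\bE\big[\tfrac{1}{K}\sum_k \caH(\pi_k)\big] = \tfrac{1}{K}\sum_k \bE[\caH(\pi_k)]$, and since each $\pi_k$ is drawn from the same distribution $P(\pi)$ as $\pi$, every term equals $\bE_{\pi}[\caH(\pi)]$. Combining this with the displayed inequality yields $\bE_{\pi_1,\dots,\pi_K}[\caH(\hat{\pi})] \ge \bE_{\pi}[\caH(\pi)]$, as claimed.

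I do not expect a genuine obstacle here: the entire content is the concavity of entropy, and neither independence of the $\pi_k$ nor any property of $P(\pi)$ beyond being \emph{identically distributed} to $\pi$ is actually needed. The only point requiring care is bookkeeping about whether $\caH$ denotes the per-state action entropy or its expectation over visited states; because the Jensen step holds for each fixed $s$, either reading gives the same conclusion, and equality holds precisely when all sub-policies coincide almost surely (so that the ensemble is strictly more exploratory whenever the sub-policies genuinely differ, which is exactly what the diversity regularization in Eq.~(\ref{eq:div_loss}) promotes).
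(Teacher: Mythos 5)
Your proof is correct and follows essentially the same route as the paper's: both establish the pointwise inequality $\caH(\hat{\pi}) \ge \frac{1}{K}\sum_{k=1}^{K}\caH(\pi_k)$ and then conclude by linearity of expectation and the fact that each $\pi_k$ is identically distributed to $\pi$. The only cosmetic difference is that you invoke concavity of the entropy functional directly and apply Jensen to $\caH$, whereas the paper re-derives that same inequality by applying Jensen to the logarithm inside the sum (i.e., the standard non-negativity-of-KL computation).
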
 

\begin{proof}
\begin{equation}
\begin{aligned}
    H(\hat\pi)-\frac{1}{K}\sum_{k=1}^{K} \caH(\pi_{k}) &= -\sum_a \frac{\sum_{k=1}^{K}\pi_{{k}}(a|s)}{K} \log \left(\frac{\sum_{k'=1}^{K}\pi_{{k'}}(a|s)}{K}\right) \\
    &~~~~~ +\frac{1}{K} \sum_{k=1}^{K} \sum_a \pi_{k}(a|s) \log(\pi_{k}(a|s))\\
    & = -\frac{1}{K} \sum_{k=1}^{K} \sum_a \pi_{k}(a|s) \left(\log \left(\frac{\sum_{k'=1}^{K}\pi_{{k'}}(a|s)}{K}\right)-\log(\pi_{k}(a|s))\right) \\
    & = -\frac{1}{K} \sum_{k=1}^{K} \sum_a \pi_{k}(a|s) \log \left(\frac{\sum_{k'=1}^{K}\pi_{{k'}}(a|s)}{K\pi_{{k}}(a|s)}\right)\\
    & \geq -\frac{1}{K} \sum_{k=1}^{K}\log\left(\sum_a \pi_{k}(a|s) \frac{\sum_{k'=1}^{K}\pi_{{k'}}(a|s)}{K\pi_{k}(a|s)}\right) ~~~~~~~~~\text{(Jensen's inequality)}\\ 
    & = -\frac{1}{K} \sum_{k=1}^{K}\log \left(\frac{1}{K} \sum_a \sum_{k'=1}^{K}\pi_{{k'}}(a|s)\right)\\
    & = -\frac{1}{K} \sum_{k=1}^{K} \log(1) = 0 ~.\nonumber \\
    \Rightarrow H(\hat\pi) &\ge \frac{1}{K}\sum_{k=1}^{K} \caH(\pi_{k})
\end{aligned}
\end{equation}

Then
\begin{equation}
\begin{aligned}
    \bE_{\pi_1,\pi_2,...,\pi_K}[\caH(\hat{\pi})] &\ge \bE_{\pi_1,\pi_2,...,\pi_K}\left[\frac{1}{K} \sum_{k=1}^{K}\caH(\pi_k)\right]\\
    &\ge \frac{1}{K} \sum_{k=1}^{K} \bE_{\pi_1,\pi_2,...,\pi_K}[\caH(\pi_k)] \\
    &\ge \frac{1}{K} \sum_{k=1}^{K} \bE_{\pi} [\caH(\pi)] \\
    &\ge \bE_\pi \left[ \caH(\pi) \right] ~.\nonumber
\end{aligned}
\end{equation}
\end{proof}

\end{document}